\newtheorem{theorem}{Theorem}
\newtheorem{lemma}{Lemma}
\newtheorem*{appendix_theorem}{Theorem 1}
\title{SPGP: Structure Prototype Guided Graph Pooling}
\author{%
Sangseon Lee$^1$ \quad Dohoon Lee$^2$ \quad Yinhua Piao$^3$ \quad Sun Kim$^{3,4,5,6}$\thanks{Corresponding author} \\
$^1$Institute of Computer Technology, Seoul National University, Korea \\
$^2$BK21 FOUR Intelligence Computing, Seoul National University, Korea \\
$^3$Department of Computer Science and Engineering, Seoul National University, Korea \\
$^4$Interdisciplinary Program in Bioinformatics, Seoul National University, Korea \\
$^5$Interdisciplinary Program in Artificial Intelligence, Seoul National University, Korea \\
$^6$AIGENDRUG Co. Ltd., Korea\\
\texttt{\{sangseon486, apap7, 2018-27910, sunkim.bioinfo\}@snu.ac.kr}
}
\begin{document}

\maketitle

\begin{abstract}
While graph neural networks (GNNs) have been successful for node classification tasks and link prediction tasks in graph, learning graph-level representations still remains a challenge. 
For the graph-level representation, it is important to learn both representation of neighboring nodes, i.e., aggregation, and graph structural information.
A number of graph pooling methods have been developed for this goal. However, most of the existing pooling methods utilize k-hop neighborhood without considering explicit structural information in a graph.
In this paper, we propose Structure Prototype Guided Pooling (SPGP) that utilizes prior graph structures to overcome the limitation.
SPGP formulates graph structures as learnable prototype vectors and computes the affinity between nodes and prototype vectors.
This leads to a novel node scoring scheme that prioritizes informative nodes while encapsulating the useful structures of the graph.
Our experimental results show that SPGP outperforms state-of-the-art graph pooling methods on graph classification benchmark datasets in both accuracy and scalability.
\end{abstract}

\section{Introduction}

Graph neural networks (GNNs) have been successfully applied to graph-structured data for node classification tasks~\cite{kipf2016semi,hamilton2017inductive,xu2018powerful} and link prediction tasks~\cite{zhang2018link,yun2021neo}. 
Most of the existing GNNs use graph convolution to aggregate neighbor information at the node-level.
However, graph-level representation requires additional condensation of an entire graph. 
For this,  a number of graph pooling methods have been developed and used for learning a compact embedding of an entire graph.
Capturing global or hierarchical structures in a graph is important to learn accurate representations of  graphs of diverse characteristics such as size, connectivity, or density.
Due to the structural diversity of graphs, developing efficient and effective graph pooling methods remains a challenge. 
The main difficulty is how to consider node features and graph structures simultaneously.

To address these issues of the graph pooling methods, a number of methods have been developed to capture global or hierarchical structures of a graph.
Global pooling methods~\cite{vinyals2015order,li2015gated,zhang2018end} obtain graph-level representations directly from node-level representations via permutation invariant readout functions.
They handle global interactions between nodes, but they are inherently flat and are not powerful enough to learn the hierarchical structures of the graph.
Recently, hierarchical graph pooling methods have been proposed, reducing gradually a graph to smaller pooled graphs while maintaining essential structures of a graph.
To construct the pooled graphs, a number of techniques have been explored based on 
a concept of the clustering algorithms~\cite{ying2018hierarchical,Ma:2019:GCN:3292500.3330982,yuan2020structpool}, a priority of node features~\cite{gao2019graph} and integration of structural information~\cite{lee2019self,zhang2020structure,gao2021ipool}, and end-to-end structure learning~\cite{ranjan2020asap,zhang2019hierarchical}.

However, these approaches  extract graph structural information from k-hop neighborhood, which faces two technical issues.
First, graph pooling based on  k-hop neighborhood depends on $k$, which is often an arbitrary value. 
When the value of $k$ is small, the receptive field of a  k-hop neighborhood is relatively limited to local information which does not encapsulate the global structural information of a graph~\cite{garg2020generalization}.
On the other hand, with a large value of $k$,  irrelevant nodes can be easily included within the receptive field. 
This observation can be even worse in scale-free networks such as technological and biological networks~\cite{ broido2019scale,xu2018representation,oono2019graph}.
Second, explicit graph structures, that can be  difficult to represent with k-hop neighborhood, are overlooked.
For example, biconnected component (BCC),  one of the important graph structures in graph theory, is an important candidate graph structure.
Mutual interactions within the BCC can cover long-range information with low noise.
Thus, BCC is widely used to effectively capture the global structural information of a graph in various tasks such as weighted vertex cover or minimum cost flow~\cite{hong2018bc,shao2020efficient, hochbaum1993should}.
Another promising candidate is a set of cliques.
It is well known that the clique set is closely related to the global topology of the graph~\cite{otter2017roadmap,petri2013topological}.
Due to these global topology-related properties, cliques are being utilized for predicting graph-level properties in various fields (e.g., in chemistry~\cite{national1995mathematical} and biology~\cite{srihari2017computational}).
These observations motivate us to develop a graph pooling method that utilizes explicit graph structures beyond k-hop neighborhood.

\begin{figure*}[t]
\centering
\includegraphics[width=\textwidth]{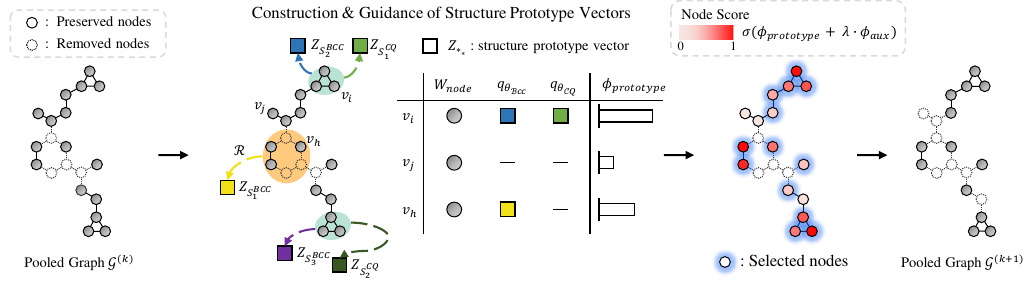} 
\caption{Concept of SPGP:  SPGP first constructs the prototype vectors $Z_{\mathcal{S}^{BCC}_*}$ and $Z_{\mathcal{S}^{CQ}_*}$ by utilizing biconnected components and cliques as the structure prototypes. The global structural importance of a node is calculated based on the information of the node itself ($W_{node}$) and the affinities between the node and the prototype vectors, which are measured by relation modules $q_{\theta_s}$. Finally, with an auxiliary score, SPGP extracts important nodes to generate the pooled graph.
}
\label{fig:overview}
\end{figure*}

In this paper, we propose the Structure Prototype Guided Pooling (SPGP) that extracts informative nodes and obtains more accurate graph representations by utilizing the overlooked graph structures related to the global information of the graph. 
Given the prior graph structures, SPGP considers them as structure prototypes and computes their information through learnable prototype vectors in an end-to-end fashion.
To guide the pooling, SPGP prioritizes nodes by measuring the affinity between the nodes and the prototype vectors.
We extensively validate SPGP on real-world graphs in widely used benchmark datasets including the large-scale Open Graph Benchmark (OGB) dataset, where SPGP outperforms existing graph pooling methods in both accuracy and scalability.
Our paper’s main contributions are:
\begin{itemize}
	\item We propose a new graph pooling that utilizes prior graph structures, which are hard to represent as k-hop neighborhood, to make more accurate graph representations with global structural information.
    \item We show that SPGP can capture structural information better than the existing pooling methods. Additionally, SPGP efficiently addresses the isolated node problem of graph pooling, without computationally expensive procedures.
    \item We evaluate SPGP on widely used benchmark datasets for graph classification under the extensive experiments such as 20 different seeds with 10-fold cross-validation on each seed. We also demonstrate SPGP outperforms over the state-of-the-art graph pooling methods.
\end{itemize}

\section{Related Works}
\subsection{Graph Neural Networks}
Recently, various GNNs have been proposed to compensate for the shortcomings of graph representation learning through matrix factorization~\cite{ou2016asymmetric} or random walk~\cite{grover2016node2vec}.
These GNNs can be categorized into two domains: spectral and spatial.
Spectral methods generally utilize graph Fourier transform with approximations for handling large graphs~\cite{defferrard2016convolutional,kipf2016semi}.
The early proposed GCN~\cite{kipf2016semi} bridges the gap between "spatial" and "spectral"-based methods, due to its convolutional form can be seen as a first-order approximation of ChebNet~\cite{defferrard2016convolutional}. GIN~\cite{xu2018powerful} provides a theoretical proof of the best way to propagate and aggregate information and adds a linear mapping of propagated information and its own information to GCN, which enhances the learning power of graph neural networks. GAT~\cite{velivckovic2018graph} proposes a new way to learn the contributions of k-hop local neighborhood nodes to the target node and obtains good performance in graph benchmark tasks. There are also many approaches that try diverse aggregation approaches to explore more effective strategies, such as graphSage~\cite{hamilton2017inductive}, which aggregates neighborhood information by randomly sampling local neighborhood nodes and uses different methods (mean, lstm, gcn, pool) to obtain a more effective map neural network architecture.

\subsection{Graph Pooling}
The pooling operation performs downsampling on the graph to learn the graph representation efficiently.
There are two branches of the pooling operations: global pooling and hierarchical pooling.
The global pooling methods~\cite{vinyals2015order,li2015gated,zhang2018end} summarize the entire graph at once by permutation invariant readout functions.
Since these methods do not capture the hierarchical structures in the graph, hierarchical pooling methods have been proposed.
DiffPool~\cite{ying2018hierarchical} uses node embeddings to smoothly assign nodes into clusters to construct a new graph topology.
TopK Pooling~\cite{gao2019graph} selects the top k-th nodes by scoring the nodes with their hidden representations.
SAGPool~\cite{lee2019self} is an extension of TopK Pooling that calculates node scores by considering neighborhood information via an additional GNN.
ASAP~\cite{ranjan2020asap} clusters local subgraphs by capturing local extremum information with maintaining graph connectivity.
GSAPool~\cite{zhang2020structure} learns structural feature-based information by mixing node information and structural information.
HGP-SL~\cite{zhang2019hierarchical} performs attention based structure learning on the pooled graph.
Unlike these methods, which focus only on k-hop neighborhood, our approach uses more global structural information through prior graph structures.

\section{SPGP: Structure Prototype Guided Pooling}
In this section, we introduce our proposed method SPGP for graph pooling as illustrated in Figure \ref{fig:overview}.
Given prior graph structures such as BCC or cliques, SPGP computes the significance of the graph structures in the form of learnable prototype vectors using node representations with contextual information.
The importance of these structure prototype vectors is then distributed to the nodes belonging to the graph structures.
Further, with an auxiliary node scoring scheme, SPGP prioritizes the structural importance of nodes to select the top scoring nodes for creating the pooled graph.
In the following, we describe the SPGP approach in detail.

\subsection{Notations and Problem Statement}
The problem of graph classification considers a set of labeled graphs $D = \{(\mathcal{G}_1, y_1), \dots , (\mathcal{G}_M, y_M)\}$, and we aim to learn a mapping $f: G \rightarrow Y$, where $G$ is the set of graphs and $Y$ is the set of graph labels.
For an arbitrary graph $\mathcal{G} = (\mathcal{V}, \mathcal{E}, X)$, the node set $\mathcal{V}$ and the edge set $\mathcal{E}$ have $n$ nodes and $e$ edges, respectively.
$X \in \mathbb{R}^{n \times \delta}$ represents the node feature matrix, where $\delta$ is the dimension of node features.
$A \in \mathbb{R}^{n \times n}$ denotes the adjacency matrix.
After the pooling operation on layer $k$, the number of nodes is changed as $n^{(k)}$, and the pooled graph $\mathcal{G}^{(k)}$ has the hidden representation matrix $H^{(k)} \in \mathbb{R}^{n^{(k)} \times d}$ and the adjacency matrix $A^{(k)} \in \mathbb{R}^{n^{(k)} \times n^{(k)}}$, where $d$ is the dimension of hidden representations and $\mathcal{G}^{(0)} = \mathcal{G}$ and $H^{(0)} = X$.
For SPGP, one or multiple types of prior graph structures can be used.
Given a graph $\mathcal{G}$, the $s$-th type of graph structure is denoted as the structure prototype $SP_{s} = \{\mathcal{S}^s_{t} \mid \mathcal{S}^s_{t} \subset \mathcal{V}, t= 1 \dots T\}$, where $\mathcal{G}$ has $T$ graph structures of type $s$ and $\mathcal{S}^s_{t}$ is a set of nodes belonging to the $t$-th graph structure of $SP_{s}$.

\subsection{Guided Pooling with Structure Prototype Vectors}
The core idea of graph pooling is to select the nodes that are important in terms of the graph as a whole.
Unlike existing graph pooling methods, SPGP guides the learning process through the structure prototypes $SP_s$.
SPGP learns a \textit{node selector} using structure prototype vectors which constitute the graph structures based on the node representations with contextual information.

\subsubsection{Node Selection through Structure Prototype Vectors}

Given a graph $\mathcal{G}^{(k)}$, SPGP learns the node importance by utilizing the structure prototypes $SP_{s}$ and the node representation $\widetilde{h}^{(k)}_v$, which contains the node contextual information (See Appendix \ref{appendix:context_info}).
The global structural information contained in the $SP_{s}$ is distributed to each node through the learnable \textit{structure prototype vectors} $Z_{\mathcal{S}^s_t}$, which can be expressed as:
(For simplicity, we omit the superscript $k$ as below.)
\begin{equation}
    Z_{\mathcal{S}^s_t} = \mathcal{R}\left(\{\widetilde{h}_v \mid v \in \mathcal{S}^s_t\}\right) 
    \label{eq:prototype_R}
\end{equation}
where $\mathcal{R}$ is an aggregation function that integrates the representation of nodes belonging to $\mathcal{S}^s_t$ of the structure prototype $SP_s$.
$\mathcal{R}$ can be any permutation-invariant methods such as simple average or attention mechanism.
For computational efficiency, we simply utilize element-wise max operation to obtain the prototype vectors, i.e., $Z_{\mathcal{S}^s_t, j} = max_{v \in \mathcal{S}^s_{t}}\widetilde{h}_{v,j}$.
In practice, despite the simple aggregator, the information in the global structure is well represented.

\paragraph{\textit{Prototype Score}}
Then, using the structure prototype vectors $Z_{\mathcal{S}^s_t}$, a prototype  score is calculated as:
\begin{equation}
    \phi_{prototype}(v) = \sum_s q_{\theta_{s}}\left(\{Z_{\mathcal{S}^s_t} \mid v \in \mathcal{S}^s_t\}, \widetilde{h}_v \right) + \boldsymbol{W}_{node} \, \widetilde{h}_v
    \label{eq:prototype_score}
\end{equation}
where $\boldsymbol{W}_{node} \in \mathbb{R}^{d \times 1}$ is a learnable parameter for extracting information of the node itself.
$q_{\theta_{s}}(.)$ is relation module of the structure prototype $SP_s$ that learns affinities between the prototype vectors and the node.
Specifically, the relation module $q_{\theta_{s}}$ is calculated as:
\begin{equation}
    q_{\theta_{s}}\left(\{Z_{\mathcal{S}^s_t} \mid v \in \mathcal{S}^s_t\}, \widetilde{h}_v \right)  =  \boldsymbol{W}_{s} \, \textnormal{Concat}\left(\sum_{z \in Z_v} z, \;\;\widetilde{h}_v\right) 
    \label{eq:prototype}
\end{equation}
where $Z_v$ is denoted as $\{Z_{\mathcal{S}^s_t} \mid v \in \mathcal{S}^s_t\}$. $\boldsymbol{W}_{s} \in \mathbb{R}^{2d \times 1}$ is a learnable parameter for extracting relationship between the prototypes and the node. 
If the node $v$ does not belong to the structure prototype $SP_s$,  the relation module $q_{\theta_{s}}$ generates zero, which means no affinity between the node and the structure prototype.

\paragraph{\textit{Auxiliary Score}}
Inspired by ~\cite{gao2021ipool}, we utilize neighborhood information gain as an auxiliary score to obtain local structure information.
Different from the previous work, we aggregate representations of the neighborhood with a learnable projection function.
\begin{equation}
    \phi_{aux}(v) = {\lVert}\widetilde{h}_v - \sum_{u \in N_v}\boldsymbol{W}_{aux} \widetilde{h}_u{\rVert}_1
    \label{eq:aux_score}
\end{equation}
$\boldsymbol{W}_{aux} \in \mathbb{R}^{d \times d}$ is a learnable projection that adjusts the discrepancy between the node $v$ and the neighborhood.
${\lVert}\cdot{\rVert}_1$ is $L_1$-norm or Manhattan norm.

\paragraph{\textit{Total Score}}
Finally, the node score $\phi(v)$ of SPGP is expressed as the prototype score with the auxiliary score to learn the structural information of the node $v$:
\begin{equation}
    \phi(v) = \sigma\left(\phi_{prototype}(v) + \lambda \, \phi_{aux}(v)\right)
\end{equation}
where $\sigma$ denotes the activation function such as sigmoid or ReLU.
$\lambda \in [0, 1]$ is the regularization for the auxiliary score.

\subsubsection{Generation of Pooled Graph} 
Based on the node scores $\phi(.)$, SPGP generates the pooled graph $\mathcal{G}^{(k+1)}$ at layer $k+1$  as:
\begin{align}
    \textrm{idx} & = \textrm{Top-K}(\phi(v^{(k)}), p) \label{eq:idx}\\
    H^{k+1}  = H^{k}\:[\textrm{idx}, &\;:]  \odot \phi(.), \;\;\;\;
    A^{k+1}  = A^{k}\:[\textrm{idx}, \;\textrm{idx}] \label{eq:filter}
\end{align}
where $p \in (0, 1]$ is the pooling ratio.
Top-K~\cite{gao2019graph} selects the top $\lceil p n^{(k)}_i \rceil$ nodes based on the score $\phi$. 
The $\textrm{idx}$ is indices of selected nodes for the pooled graph.
$\odot$ is broadcasted hadamard product.
Note that, the end product of node scores $\phi(.)$ is a discrete selection of nodes, which is not trainable by back-propagation.
Therefore, the score $\phi(.)$ is multiplied on hidden representations to make the SPGP learnable in an end-to-end fashion.

\subsection{Well-studied Structure Prototypes: BCC and Cliques}
SPGP can use any graph structure related to global information of the graph.
In this paper, as an example of the structure prototypes, we utilize two well-studied structures related to the overall graph topology in graph theory: \textit{Biconnected Component (BCC)} and \textit{Cliques}.

Given a graph $\mathcal{G}$, a set of biconnected components $SP_{BCC}$ and a set of cliques $SP_{CQ}$ are obtained by the graph traversal algorithms~\cite{hopcroft1973algorithm,zhang2005genome}.
Specifically, for biconnected components, we discard the biconnected components with fewer than three nodes to avoid simple biconnected components, i.e., single edges.
Then, we define the structure prototype for biconnected components of $\mathcal{G}$ as:
\begin{equation}
SP_{BCC} = \{\mathcal{S}^{BCC}_{p} \mid \mathcal{S}^{BCC}_{p} \subset \mathcal{V}, |\mathcal{S}^{BCC}_{p}| \geq 3, p = 1 \dots P \}
\end{equation}
where $\mathcal{G}$ has $P$ biconnected components and $\mathcal{S}^{BCC}_{p}$ is a set of nodes belonging to the $p$-th biconnected component of $\mathcal{G}$.

For cliques, we also discard cliques with less than three nodes.
A set of cliques can characterize the overall topology of the graph~\cite{petri2013topological}, so if adjacent cliques share more than half of the nodes, they are merged.
Then, we define the structure prototype for cliques of $\mathcal{G}$ as:
\begin{equation}
SP_{CQ} = \{\mathcal{S}^{CQ}_{q} \mid \mathcal{S}^{CQ}_{q} \subset \mathcal{V}, |\mathcal{S}^{CQ}_{q}| \geq 3, q=1 \dots Q\}
\end{equation}
where $\mathcal{G}$ has $Q$ the merged cliques and $\mathcal{S}^{CQ}_{q}$ is a set of nodes belonging to the $q$-th cliques of $\mathcal{G}$.

The time complexity of finding BCC is $\mathcal{O}(|\mathcal{V}|+|\mathcal{E}|)$.
Cliques can be efficiently searched on each biconnected component, not the entire graph~\cite{hochbaum1993should}.
In addition, graph traversal is performed only once during preprocessing, not for model training.

\section{Theoretical and Computational Analysis} \label{sec:theorectical}

\subsection{SPGP can discriminate structural role of nodes}
We investigate  the theoretical understanding on the importance of learning structural information.
Following existing studies on the importance of graph structure~\cite{you2019position,nikolentzos2020k}, we assume that all nodes have the same feature vector, which means that the only information we can learn from the graph is structural information.
In general, nodes belonging to a local level subgraph tend to have same or similar feature vectors due to homophily~\cite{mcpherson2001birds,evtushenko2021paradox}.
In this case (e.g., naphthalene), it is difficult to learn the structural role between nodes in the subgraph using node features.
Note that if the structural meaning of the nodes on the graph is different, efficient graph pooling should be able to identify the difference in the amount of information of each node.

\begin{theorem}
Given a regular graph that has prior graph structures,
\begin{enumerate}[label=(\roman*)]
    \item Standard k-hop neighborhood based graph pooling methods cannot distinguish the nodes.
    \item SPGP can assign different node scores based on the structural role of nodes.
\end{enumerate}
\label{theorem1}
\end{theorem}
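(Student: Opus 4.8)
The plan is to establish the two items separately: (i) by reducing to a standard expressivity fact about message passing, and (ii) by exhibiting a single purpose-built regular graph and tracking where SPGP's score picks up the difference.

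For (i), I would use that $1$-dimensional Weisfeiler--Leman colour refinement is constant on a regular graph once all node features are equal: if every vertex carries feature $x$ and has degree $d$, then one step of any normalized neighbourhood aggregation used by these methods maps a constant signal to a constant signal, since the row sums of the propagation operator are constant on a $d$-regular graph, and by induction all node embeddings after any fixed number of layers coincide. The node-scoring functions of the cited $k$-hop methods (Top-$K$, SAGPool, ASAP, GSAPool, HGP-SL) are computed from such embeddings, or directly from $X$, hence are bounded by $1$-WL; so they return the same score at every vertex and the ensuing Top-$K$ selection is an arbitrary tie-break. I expect this part to be routine, the only care being a precise statement of the sense in which each listed baseline is $1$-WL bounded.

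For (ii), the plan is to build one $d$-regular graph $G$ that carries prior structures and on which SPGP separates vertices that (i) cannot. I would take $G$ to consist of two triangles that share exactly one vertex $c$ — their overlap being a single node, the merge rule used to form $SP_{CQ}$ keeps them as two distinct clique prototypes — padded out by a triangle-free $d$-regular gadget so that $G$ is $d$-regular and has no further clique of size $\geq 3$ (checking that such a gadget exists is routine but slightly fiddly; attaching a cube-like structure to the triangle leaves works). In $G$ the hub $c$ lies in two clique prototypes, each triangle leaf in one, and each padding vertex in none, while by the argument of (i) all of these vertices are $1$-WL-equivalent; moreover, setting $\boldsymbol{W}^{(k)}=0$ makes the contextual embedding $\widetilde{h}_v$ equal to the common feature vector at every vertex, so any difference in $\phi$ must come from the structure-prototype term. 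Then $\phi_{aux}$, the self-term $\boldsymbol{W}_{node}\widetilde{h}$, and — when the padded $G$ is $2$-connected and hence a single BCC prototype — the biconnected-component relation module take the same value at all three vertices, whereas $q_{\theta_{CQ}}=\boldsymbol{W}_{CQ}\,\textnormal{Concat}(\sum_{z\in Z_v}z,\widetilde{h})$ is a nonzero affine function of $|Z_v|\in\{0,1,2\}$ for a suitable choice of $\boldsymbol{W}_{CQ}$; hence $\phi_{prototype}$, and therefore $\phi$, attains three distinct values. This shows that SPGP \emph{can} assign node scores reflecting structural role.

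The step I expect to be the real obstacle is the design of $G$: both the merge rule (``merge cliques sharing more than half of their nodes'') and the ``discard components with fewer than three nodes'' rule tend to destroy exactly the asymmetry one is trying to expose — two triangles sharing an edge collapse into a single prototype, and a cut vertex whose only extra component is a bridge loses it — so the construction must be arranged so that the distinguishing prototypes survive preprocessing while the padding preserves regularity and introduces no new cliques. Once $G$ is in hand, the term-by-term evaluation of $\phi$ on $G$ and the reduction of (i) to $1$-WL are mechanical.
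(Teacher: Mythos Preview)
Your approach to (i) is essentially the same as the paper's: both reduce to the observation that message passing preserves a constant node signal on a regular graph. The paper isolates this as a separate lemma proved by a one-line induction on the layer index; you phrase it as the $1$-WL bound, but the mechanism is identical.

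For (ii) you take a genuinely different and more substantive route. The paper's argument for (ii) is a two-sentence assertion: it simply says that because Equations~\ref{eq:prototype_score} and~\ref{eq:prototype} receive prototype-membership information, ``the structural roles contained in the given structure prototypes can be reflected in the nodes through the equations,'' and concludes. There is no explicit witness graph, no choice of parameters, and no calculation. Your plan --- build a concrete $d$-regular graph in which the clique-prototype multiplicities $|Z_v|$ take at least two distinct values, set $\boldsymbol{W}^{(k)}=0$ so that $\widetilde h_v$ is node-independent, and then read off that $q_{\theta_{CQ}}$ is an affine function of $|Z_v|$ for a suitable $\boldsymbol{W}_{CQ}$ --- actually certifies the ``can'' in the statement rather than asserting it. The trade-off is exactly the one you identify: you must engineer the witness so that the merge and size-threshold rules leave the distinguishing prototypes intact while the padding preserves regularity and introduces no new triangles; the paper sidesteps this labour entirely by never exhibiting a witness. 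Your version buys rigor; the paper's buys brevity.
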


\begin{proof}
See Appendix \ref{appendix:proof} for proof.
\end{proof}

Motivated by the above analysis, we believe that providing the pooling with prior graph structures beyond k-hop neighborhood can better learn the structural information of the graph.
Therefore, to extract and model the structure prototype information in an end-to-end fashion, we formulate SPGP based on the learnable structure prototype vectors.

\begin{wrapfigure}{r}{0.4\textwidth}
    \vspace{-20pt}
    \centering
    \includegraphics[width=0.38\textwidth]{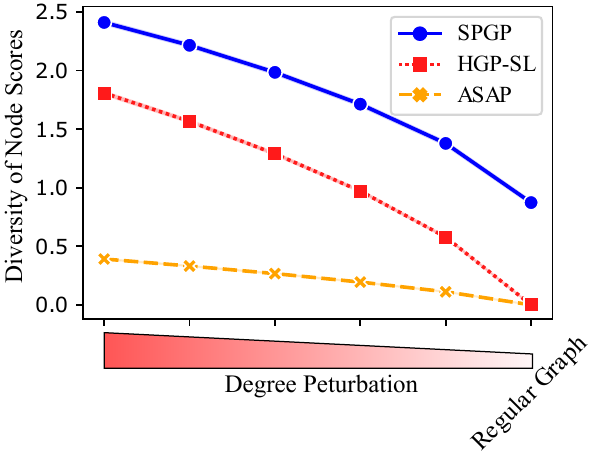}
    \vspace{-5pt}
    \caption{Node Score's diversity of SPGP, HGP-SL, and ASAP on regular graphs with degree perturbations.}
    \vspace{-30pt}
  \label{fig:regular_graph}
\end{wrapfigure}

To further support the observation in more general situations, we measure the diversity of node scores in the regular graphs with degree perturbations.
That is, we generate more realistic graphs with structural diversity from regular graphs (See Appendix \ref{appendix:regular_test} for details).
Figure \ref{fig:regular_graph} shows that, as in Theorem \ref{theorem1}, SPGP learns the structural roles of nodes in regular graphs, but the other pooling methods have failed.
Furthermore, as the degree perturbation increases, that is, as the distribution of node degree on the graph diversifies, SPGP captures the structural role of nodes better than the other methods.

\subsection{Efficient Handling of Isolated Nodes} \label{sec:complexity}

SPGP is a node selection based graph pooling which can cause isolated nodes during the pooling.
Since  isolated nodes cannot aggregate the information of neighbors, they may result in loss of graph structural information and degenerate graph classification performances.
To address this issue, the existing graph pooling methods, such ASAP~\cite{ranjan2020asap} and HGP-SL~\cite{zhang2019hierarchical},  perform `structure learning' that refines an adjacency matrix of a pooled graph.
However, this operation is an additional task performed after graph pooling and it is computationally expensive.
The computational complexity of ASAP is  $O(n^3)$ time at a cost of $O(n^2)$ space.
HGP-SL is $O(dn^2)$ time at a cost of $O(n^2)$ space.
Meanwhile, our SPGP does not require an additional step for addressing the isolated nodes.
During the graph pooling, Equation \ref{eq:prototype} aggregates and updates structural information of nodes belonging to structure prototypes, even if they are isolated.
Furthermore, the computational complexity of Equation \ref{eq:prototype} is $O(sdn)$ time at a cost of $O(dn)$ space, $n \gg s, d$, which is more efficient for the large-scale graphs.
Total computational complexity of SPGP is discussed in Appendix \ref{appendix:complexity}.

\section{Experiment Setup}
\subsection{Datasets} \label{sec:dataset}

For fair comparisons, we use the widely used benchmark datasets for graph classification.
Specifically, datasets from biochemical domains of TUDataset~\cite{Morris+2020} (DD, PROTEINS, NCI1, NCI109, and FRANKENSTEIN; ranging from 1K to 4K graphs) are utilized for performance evaluation.
Furthermore, two large-scale benchmark datasets (ogbg-molhiv with 41.1K graphs and ogbg-molpcba with 437.9K graphs) are utilized from Open Graph Benchmark (OGB)~\cite{hu2020ogb}.
Meanwhile, in real-world datasets, especially in the biochemical domains, most graphs contain more than one BCC or clique.
For example, in all datasets except FRANKENSTEIN, more than 97\% of graphs have these structures
(DD: 100\%, PROTEINS: 99.9\%, NCI1: 98.0\%, NCI109: 97.6\%, ogbg-molhiv: 100\%, ogbg-molpcba: 99.9\%).
Even in the chimera dataset, FRANKENSTEIN, the ratio is 82.6\%.
Detailed information about the datasets is described in the Appendix \ref{appendix:dataset_detail}.

\subsection{Baselines}
We compare the performance of SPGP with state-of-the-art global pooling (i.e., Set2Set~\cite{vinyals2015order}, Global-Attention~\cite{li2015gated}, and SortPool~\cite{zhang2018end})) or hierarchical graph pooling methods (i.e., DiffPool~\cite{ying2018hierarchical}, TopK~\cite{gao2019graph}, SAGPool~\cite{lee2019self}, ASAP~\cite{ranjan2020asap}, GSAPool~\cite{zhang2020structure}, HGP-SL~\cite{zhang2019hierarchical}, and iPool~\cite{gao2021ipool}).
We report the results, except for GSAPool, HGP-SL, $\textrm{iPool}_{greedy}$, and $\textrm{iPool}_{local}$, from~\cite{ranjan2020asap} and measure the performance of GSAPool and HGP-SL based on the released source codes. We implement both of iPool variants by modifying the code of HGP-SL.
Brief explanations of the hierarchical graph pooling methods are described in Appendix \ref{appendix:baselines}

\subsection{Experimental Details}
In case of TUDataset, following~\cite{ranjan2020asap}, we adopt the extensive experimental designs for performance evaluation.
In short, the performances are averaged over the results based on 20 different random seeds.
In addition, on each seed, 10-fold cross-validation is performed and data split configurations are the same in~\cite{ranjan2020asap}.
In case of OGB datasets, we utilize the scaffold splitting scheme and report the average results on the 20 different random seeds.
Hyperparameters of SPGP are selected by grid search on the validation data.
Settings of hyperparameter values are shown in the Appendix \ref{appendix:training}.

\section{Results}
This section attempts to answer the following questions:
\begin{enumerate}[label=\textbf{Q{{\arabic*}}}]
    \item Whether can prior graph structures improve graph classification performances of SPGP compared to baselines? (Section \ref{sec:q1})
    \item How does a regularization $\lambda$ affect the performances over various datasets? (Section \ref{sec:q2})
    \item Can SPGP capture meaningful structures during pooling? (Section \ref{sec:q3})
    \item How does SPGP perform compared to baselines on large-scale datasets? (Section \ref{sec:q4})
\end{enumerate}

\begin{table}[t]
\caption{Test Accuracy (\%) performance on TUDataset. Average accuracy and standard deviation is reported for 20 random seeds. The best result and its comparable results ($p > 0.05$) from t-test for each dataset are shown in bold. The second best results on each category are shown as underlined. Compared with the previous graph pooling methods, the proposed SPGP outperforms the baselines.}
\label{tab:performance_comparison}
\centering
\resizebox{\textwidth}{!}{
\begin{tabular}{ccccccc}
\toprule
Category & Methods & DD & PROTEINS  & NCI1 & NCI109 & FRANKENSTEIN \\ \midrule
Baseline & GCN & $68.33 \pm 1.30$ & $73.83 \pm 0.33$ & $73.92 \pm 0.43$ & $72.77 \pm 0.57$ & $60.47 \pm 0.62$ \\ \midrule
\multirow{3}{*}{Global} & Set2Set & $71.60 \pm 0.87$ & $72.16 \pm 0.43$ & $66.97 \pm 0.74$ & $61.04 \pm 2.69$ & $61.46 \pm 0.47$ \\
& Global-Attention & $71.38 \pm 0.78$ & $71.87 \pm 0.60$ & $\underline{69.00 \pm 0.49}$ & $67.87 \pm 0.40$ & $61.31 \pm 0.41$ \\
& SortPool & $\underline{71.87 \pm 0.96}$ & $\underline{73.91 \pm 0.72}$ & $68.74 \pm 1.07$ & $\underline{68.59 \pm 0.67}$ & $\underline{63.44 \pm 0.65}$ \\ \midrule
\multirow{11}{*}{Hierarchical}& DiffPool & $66.95 \pm 2.41$ & $68.20 \pm 2.02$ & $62.32 \pm 1.90$ & $61.98 \pm 1.98$ & $60.60 \pm 1.62$ \\
& TopK & $75.01 \pm 0.86$ & $71.10 \pm 0.90$ & $67.02 \pm 2.25$ & $66.12 \pm 1.60$ & $61.46 \pm 0.84$ \\
& SAGPool & $76.45 \pm 0.97$ & $71.86 \pm 0.97$ & $67.45 \pm 1.11$ & $67.86 \pm 1.41$ & $61.73 \pm 0.76$ \\
& ASAP & $\underline{76.87 \pm 0.70}$ & $\underline{74.19 \pm 0.79}$ & $71.48 \pm 0.42$ & $70.07 \pm 0.55$ & $\underline{66.26 \pm 0.47}$ \\
& GSAPool & $76.07 \pm 0.81$ & $73.53 \pm 0.74$ & $70.98 \pm 1.20$ & $70.68 \pm 1.04$ & $60.21 \pm 0.69$ \\  
& HGP-SL & $75.16 \pm 0.69$ & $74.09 \pm 0.84$ & $75.97 \pm 0.40$ & $74.27 \pm 0.60$ & $63.80 \pm 0.50$ \\ 
& $\textrm{iPool}_{greedy}$ & $75.39 \pm 0.76$ & $73.12 \pm 0.55$ & $76.03 \pm 0.60$ & $\underline{74.96 \pm 0.58}$ & $62.16 \pm 0.35$ \\ 
& $\textrm{iPool}_{local}$ & $75.22 \pm 0.70$ & $73.56 \pm 0.59$ & $\underline{76.49 \pm 0.62}$ & $74.64 \pm 0.39$ & $62.24 \pm 0.55$ \\  \cmidrule{2-7} 
& $\textnormal{SPGP}_{BCC}$ &  $76.01 \pm 0.92$ & $74.10 \pm 0.66$ & $77.36 \pm 0.31$ & $76.89 \pm 0.35$ & $66.86 \pm 0.67$ \\ 
& $\textnormal{SPGP}_{CQ}$ &  $76.77 \pm 0.80$ & $\boldsymbol{75.04 \pm 0.76}$ & $78.57 \pm 0.34$ & $\boldsymbol{77.23 \pm 0.48}$ & $67.72 \pm 0.51$ \\ 
& $\textnormal{SPGP}_{BCC+CQ}$  & $\boldsymbol{77.76 \pm 0.73}$ & $\boldsymbol{75.20 \pm 0.74}$ & $\boldsymbol{78.90 \pm 0.27}$ & $\boldsymbol{77.27 \pm 0.32}$ & $\boldsymbol{68.92 \pm 0.71}$ \\ \bottomrule
\end{tabular}
}
\end{table}

\subsection{Performance Comparison on TUDataset (Q1)} \label{sec:q1}


Table \ref{tab:performance_comparison} reports the experimental results of SPGP with baseline models on TUDataset.
From the table, we find that our $\textnormal{SPGP}_{BCC+CQ}$ significantly outperforms all baselines in terms of graph classification accuracy.
$\textnormal{SPGP}_{BCC+CQ}$ is up to 9.90\% and 2.66\% better than the second-best method in global pooling and hierarchical pooling baselines, respectively. 
We notice that the guided pooling using prior graph structures based on structure prototypes, in SPGP, is the main contribution to performance improvements against baselines.

To further investigate the effects of structure prototypes, we conduct ablation studies on SPGP with only one structure prototype.
(Due to space limit, an ablation study for without structure prototypes is discussed in Appendix \ref{appendix:ablation_proto}.)
In Table \ref{tab:performance_comparison}, SPGP$_{BCC}$ and SPGP$_{CQ}$ are variants of SPGP that use only BCCs and cliques respectively.
Interestingly, both variants of SPGP achieve better classification performances than all baselines except ASAP on DD dataset, which is also comparable.
This means that providing additional graph structural information to the model during the graph pooling is useful for learning more accurate graph-level representations.
Also, since the set of cliques can be either the BCC itself or a part of the BCC, the two graph structures can be used together as structure prototypes to provide nodes with more valuable structural information.
For this reason, $\textnormal{SPGP}_{BCC+CQ}$ achieves better performances than SPGP$_{BCC}$ and SPGP$_{CQ}$.

\subsection{Effects of Regularization $\lambda$ (Q2)} \label{sec:q2}
We investigate the effects of regularization $\lambda$ on the performance of TUDataset.
We use the $\textnormal{SPGP}_{BCC+CQ}$ model and the $\lambda$ in the range 0.0, 0.2, 0.4, 0.6, 0.8, 1.0.
Figure \ref{fig:aux} shows that the need for $\phi_{aux}$ depends on the dataset.
In PROTEINS and FRANKENSTEIN, consideration of $\phi_{aux}$, i.e., $\lambda \geq 0.2$, negatively affects the performance.
To further support our observation, we investigate the closeness centrality of graphs in the datasets (See Supplementary Figure \hyperref[fig:closeness]{1} and Appendix \ref{appendix:closness}).
Interestingly, the graphs from PROTEINS and FRANKENSTEIN have a larger average of closeness centrality than the other datasets.
Note that a node with a higher closeness centrality has the shortest distance to all other nodes, so the node's information propagates more easily throughout the graph.
For this reason, $\phi_{aux}$ that considers information of local neighborhood seems to degrade the performance.
In summary, the effects of $\phi_{aux}$ and the regularization $\lambda$ may depend on graph properties such as closeness centrality.

\begin{figure*}[t]
\centering
\includegraphics[width=\textwidth]{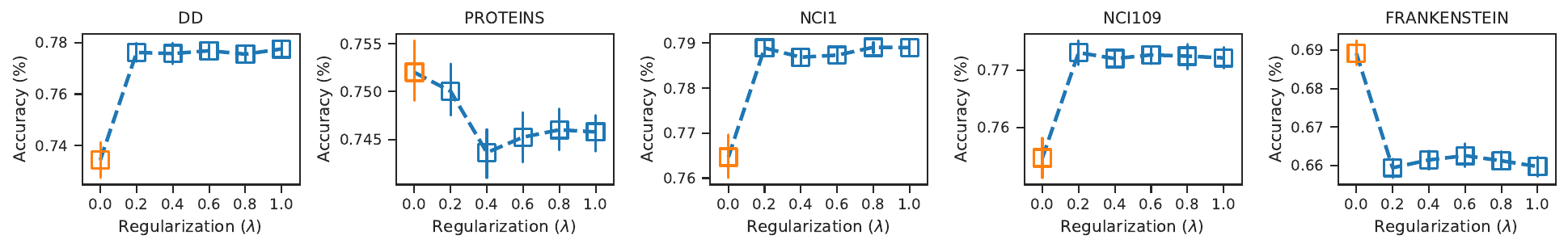} 
\caption{Performance of SPGP on the TUDataset by varying the regularization parameter $(\lambda)$. Average accuracy and standard deviation for 20 random seeds are shown as box plots. The orange box plot denotes $\lambda = 0.0$. The blue box plots denote $\lambda \geq 0.2$.}
\label{fig:aux}
\end{figure*}

\begin{figure*}[!t]
\centering
\includegraphics[width=\textwidth]{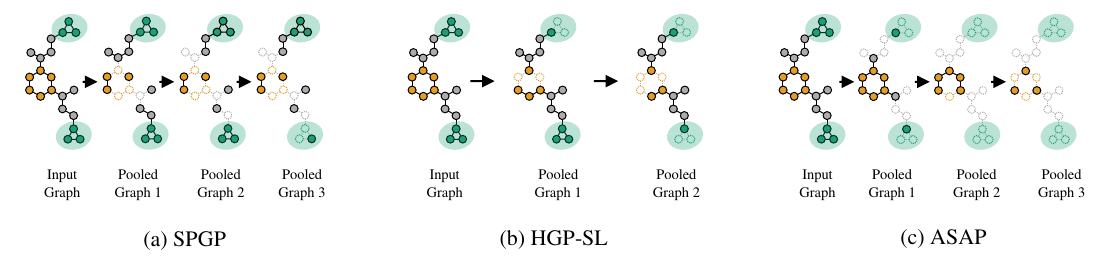} 
\caption{Visualization of different pooling methods. The input graph is sampled from FRANKENSTEIN dataset. Orange nodes represent the nodes belong to a biconnected component. Green nodes represent the nodes belong to both a biconnected component and a clique. We omit newly added edges in HGP-SL and ASAP for better visibility.}
\label{fig:pool}
\end{figure*}

\subsection{Visualization: What Structure Does SPGP Preserve? (Q3)} \label{sec:q3}
We visualize the effects of different pooling strategies of SPGP, HGP-SL, and ASAP.
We randomly sample a graph, which has mutagenicity, from FRANKENSTEIN dataset.
As shown in Figure \ref{fig:pool}, the molecule contains one aromatic ring (orange) and two functional groups of three-membered heterocycles (green) such as oxirane or aziridine. 
These two reactive groups are known as toxicophores, which are closely related to the mutagenic activity of the chemical compounds~\cite{kazius2005derivation}.
Figure \ref{fig:pool}(a) shows that SPGP well preserves both of the  informative structures after pooling. 
Specifically, since the three-membered heterocycles are cliques as well as BCCs, SPGP potentially learns the importance of the structures through the guidance of structure prototypes.
On the other hand, in Figure \ref{fig:pool}(b) and \ref{fig:pool}(c), HGP-SL and ASAP relatively focus on carbon chains and an aromatic ring, respectively.

\begin{table}[t]
\caption{Performances on OGB dataset. Evaluation metrics for ogbg-molhiv and ogbg-molpcba are ROC-AUC (\%) and average precision (\%), respectively. Average metric and standard deviations for 20 random seeds are reported. The best result and its comparable results ($p > 0.05$) from t-test are shown in bold. GCN denotes reported results in the leader-board. GCN$^\dagger$ denotes reproduced results in the 20 random seeds. OOM denotes out-of-memory error.}
\label{tab:ogb}
\centering
\begin{tabular}{ccc}
\toprule
  Methods   & ogbg-molhiv & ogbg-molpcba \\ \midrule
GCN & $76.06 \pm 0.97$ & $20.20 \pm 0.24$ \\ 
GCN$^\dagger$    & $75.86 \pm 1.15$       & $20.18 \pm 0.22$    \\ \midrule
HGP-SL & $72.46 \pm 1.55$     & $18.64 \pm 0.28$      \\
ASAP   & $76.89  \pm 1.90$     & OOM          \\ 
SPGP    & $\boldsymbol{78.15 \pm 0.83}$   & $\boldsymbol{23.95  \pm 0.97}$    \\ \bottomrule
\end{tabular}
\end{table}

\subsection{Scalability on large-Scale OGB datasets (Q4)} \label{sec:q4}

We compare SPGP to two best baseline models, ASAP and HGP-SL, on ogbg-molhiv and ogbg-molpcba from the large-scale OGB dataset.
For fair comparisons, we experiment with SPGP, ASAP, and HGP-SL in the same setting (See Appendix \ref{appendix:training} for details).
In both datasets, SPGP outperforms all baselines including the simple GCN model~\cite{kipf2016semi} without pooling operations.
Rather, unlike HGP-SL, which degrades performances through pooling operations, SPGP can extract meaningful structures on the large-scale datasets.
For ASAP, which is the best competitor in TUDataset, it shows performance improvements through pooling on ogbg-molhiv, but with out-of-memory errors on the larger dataset, ogbg-molpcba.
Due to the efficiency of SPGP, it can be jointly used with advanced graph convolution layers and the results are discussed in Appendix \ref{appendix:vn_res}

\begin{wrapfigure}{r}{0.55\textwidth}
\vspace{-13pt}
\centering
\includegraphics[width=0.53\textwidth]{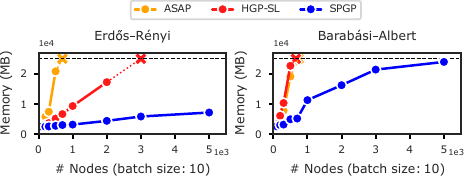} 
\vspace{-5pt}
\caption{Memory footprint of SPGP compared with baselines. Memory usage is measured for graphs of various sizes generated from two random graph models. Black dashed line represents maximum memory of GPU. X denotes out-of-memory error.}
\vspace{-13pt}
\label{fig:mem_test}
\end{wrapfigure}

To support this observation, we evaluate the GPU memory footprint of SPGP with the baselines on the graphs generated by two random graph models: Erdős–Rényi~\cite{gilbert1959random} and Barabási–Albert~\cite{barabasi1999emergence} (See Appendix \ref{appendix:mem_test} for details).
Figure \ref{fig:mem_test} shows that SPGP is very effective and scalable compared to other methods in terms of memory usage.
This is, as shown in Section \ref{sec:complexity}, because ASAP and HGP-SL additionally learn the structure of the pooled graph to extract meaningful nodes, whereas SPGP learns global structural information with fewer operations through structure prototype vectors.

\section{Conclusions} \label{sec:conclusion}


\paragraph{Limitations}
There are potential limitations of SPGP that would be an important direction for future work.
(i) Isolated nodes that do not belong to the structure prototypes are hard to address by SPGP.
(ii) Given prior graph structures, SPGP is limited to utilizing information such as size and specific shape based on only node membership information.
Other structural information may be useful.
(iii) Learning the interactions between different types of structure prototypes remains a challenge.

\paragraph{Summary}
We propose a novel graph pooling method, SPGP, for graph-level classification.
The key to success of SPGP is the consideration of informative global graph structures to obtain accurate graph-level representations.
The structure prototypes guide the node scoring scheme during pooling.
Experiments on multiple datasets and ablation studies show that utilization of the prior graph structures achieves better performance than existing pooling methods.
In this study, two well-known structures in graph theory, BCC and cliques are utilized as the prior graph structures.
In future research, SPGP can be easily extended to utilize other graph structures inspired by domain knowledge, e.g., network biomarkers for interpreting diseases and chemical building blocks in molecular graph generation.

\section*{Acknowledgements}
This research was supported by Basic Science Research Program through the National Research Foundation of Korea (NRF) funded by the Ministry of Education (NRF-2021R1A6A3A01086898) and by Institute of Information \& communications Technology Planning \& Evaluation (IITP) grant funded by the Korea government(MSIT) [NO.2021-0-01343, Artificial Intelligence Graduate School Program (Seoul National University)]

\bibliographystyle{plainnat}
\bibliography{References}

\newpage

\appendix

\section{Algorithm}

\subsection{Overall Structure of SPGP}\label{appendix:pseudo_code}

The overall structure of SPGP is described in Algorithm 1.

\begin{algorithm}[h]
\caption{Overall Structure of SPGP}
\label{alg:overall}
\textbf{Input}: Graph $\mathcal{G} = (\mathcal{V}, \mathcal{E})$; Adjacency matrix $A$; Node feature matrix $X$;  Pooling operation SPGP; Structure Prototype $SP$; Number of layers $L$, readout function Readout\\
\textbf{Output}: A predicted graph label $\widehat{y}$ 
\begin{algorithmic}[1] 
\STATE $X_{aug}$ $\leftarrow$ $FA(X, SP)$  // Add global structure related features (Section A.2)
\STATE $H^1$ $\leftarrow$ LayerNorm(GCN($X_{aug}$, $A$))
\STATE $H^1, A^1$ $\leftarrow$ SPGP($H^1$, $A$, $SP$)
\STATE $HS$ $\leftarrow$ Readout($H^1$)
\FOR{i=2...L}
\STATE $H^{i+1}$ $\leftarrow$ LayerNorm(GCN($H^{i}$, $A^{i}$))
\STATE $H^{i+1}, A^{i+1}$ $\leftarrow$ SPGP($H^{i}$, $A^{i}$, $SP$)
\STATE $HS$ $\leftarrow$ $HS$ $+$ Readout($H^{i+1}$)
\ENDFOR
\STATE $\textrm{out}$ $\leftarrow$ MLP($HS$) // Multi-Layer Perceptron
\STATE $\widehat{y}$ $\leftarrow$ $argmax(\textrm{out})$
\STATE \textbf{return} $\widehat{y}$
\end{algorithmic}
\end{algorithm}


\begin{algorithm}[h]
\caption{Graph Pooling of SPGP$(H^{(k)}, A^{(k)}, SP, p)$}
\label{alg:pooling}
\begin{algorithmic}[1] 
\STATE $\widetilde{H}^{(k)}$ $\leftarrow$ $H^{(k)} + \textnormal{LeakyReLU}\left(\boldsymbol{W}^{(k)}\textnormal{Concat}(T^{(k)}, C^{(k)})\right)$ according to Equation 1
\STATE Calculate $\phi_{prototype}$ according to Equation 5
\STATE Calculate $\phi_{aux}$ according to Equation 7
\STATE $\phi(v)$ $\leftarrow$ $\sigma\left(\phi_{prototype}(v) + \lambda \, \phi_{aux}(v)\right)$ 
\STATE $\textrm{idx}$ $\leftarrow$ $\textrm{Top-K}(\phi(v^{(k)}), p)$ 
\STATE $H^{(k+1)}$ $\leftarrow$ $H^{(k)}\:[\textrm{idx}, \;:]  \odot \phi(.)$ 
\STATE $A^{(k+1)}$ $\leftarrow$ $ A^{(k)}\:[\textrm{idx}, \;\textrm{idx}]$ 
\STATE \textbf{return} $H^{(k+1)}$, $A^{(k+1)}$ 
\end{algorithmic}
\end{algorithm}

\subsection{Node Representations with Contextual Information}\label{appendix:context_info}
In a graph, the structural information of a node can be enriched with contextual information of the node.
Inspired by previous works on message passing network~\cite{hu2020strategies}, we utilize additional graph convolution networks~\cite{kipf2016semi} to obtain node representations with contextual information.
A contextual node embedding $\widetilde{h}^{(k)}_v$ at layer $k$ associated with node $v$ can be formulated as:
\begin{equation}
    \widetilde{h}^{(k)}_v = h^{(k)}_v + \textnormal{LeakyReLU}\left(\boldsymbol{W}^{(k)}\textnormal{Concat}(t^{(k)}_v, c^{(k)}_v)\right)
    \label{eq:node_context_score}
\end{equation}
where $h^{(k)}_v \in H^{(k)}$ and $\boldsymbol{W}^{(k)} \in \mathbb{R}^{2d \times d}$ (a bias is also used but omitted for clarity).
The target node representation $t^{(k)}_v$ and the its context node representation $c^{(k)}_v$ can be expressed as:
\begin{align}
    t^{(k)}_v &= \sum_{u \in \widehat{\mathcal{N}}^{(k)}_t(v)} \frac{e_{u,v}}{\sqrt{|\widehat{\mathcal{N}}^{(k)}_t(u)||\widehat{\mathcal{N}}^{(k)}_t(v)|}}\boldsymbol{W}^{(k)}_t h^{(k)}_u\\
    c^{(k)}_v &= \sum_{u \in \widehat{\mathcal{N}}^{(k)}_c(v)} \frac{e_{u,v}}{\sqrt{|\widehat{\mathcal{N}}^{(k)}_c(u)||\widehat{\mathcal{N}}^{(k)}_c(v)|}}\boldsymbol{W}^{(k)}_c h^{(k)}_u
\end{align}
where $\widehat{\mathcal{N}}^{(k)}_t(v)$ and $\widehat{\mathcal{N}}^{(k)}_c(v)$ denote the direct neighborhood and the context nodes of the node $v$, respectively, in the self-looped adjacency matrix $\widehat{A}^{(k)} = A^{(k)} + I^{(k)}$.
The context nodes can be any k-hop ($k\geq2$) neighborhood in the graph.
In this paper, since global information of the graph is addressed through the structure prototypes, we simply employ the two-hop neighborhood.
$e_{u,v}$ denotes edge weight between node $u$ and node $v$ if available, otherwise 1.
$\boldsymbol{W}^{(k)}_t,\boldsymbol{W}^{(k)}_c \in \mathbb{R}^{d \times d}$ are learnable parameters.

\subsection{Augmentation of structure prototype related features} 
As another aspect of utilizing structure prototypes, we augment node features from the graph itself.
The node features enhanced by the structure prototypes are expected to help in the pooling operation.
Given a node $v$ of a graph $\mathcal{G}_i$, the augmented node features are defined as:
\begin{itemize}
    \item Size of biconnected components $\delta_1 : \max_{p \in N_{v}^{bcc}}\frac{|\mathcal{S}^{BCC}_{i,p}|}{|\mathcal{S}^{BCC}_{i,max}|}$
    \item Size of cliques $\delta_2 : \max_{q \in N_{v}^{cq}}\frac{|\mathcal{S}^{CQ}_{i,q}|}{|\mathcal{S}^{CQ}_{i,max}|}$
    \item Number of cliques $\delta_3 : \frac{|N_{v}^{cq}|}{|\mathcal{S}^{CQ}_i|}$
\end{itemize}
where $\mathcal{S}^{BCC}_{i, max}$ is the largest biconnected components in the graph $\mathcal{G}_i$.
$\mathcal{S}^{CQ}_{i, max}$ is the maximum cliques in the graph $\mathcal{G}_i$.
$N^{bcc}_v$ and $N^{cq}_v$ are biconnected components and cliques containing node $v$, respectively.
This augmentation can be easily extended to other prior graph structures.

\subsection{Computational Complexity} \label{appendix:complexity}
For a single layer of vanilla GCN, computational complexity is $O(md + nd^2)$. 
SPGP consists of four steps: (1) contextual node embedding, (2) prototype score, (3) auxiliary score, (4) topk pooling. 
Assume that $\mathcal{G}=(\mathcal{V}, \mathcal{E}, \mathcal{X})$, $|\mathcal{V}|=n$, $|\mathcal{E}|=m$, $\mathcal{X} \in \mathbb{R}^{n \times d}$ at layer $l$, computational complexity of each step is as follow, 
\begin{enumerate}
    \item Node Presentation with Contextual Information - Equation \ref{eq:node_context_score}: We calculates the contextual node embedding with right-to-left multiplication. Since adjacency matrix $A$ is stored as a sparse matrix, the computational complexity is $O( md + nd^2 )$. It matches the computational complexity of the vanilla GCN.

    \item Prototype Score - Equation \ref{eq:prototype_score}: The prototype score consists of two terms: relation module “$q$” and node’s self-information. The computational complexity is $O(sdn + dn)$, where $s$ is the number of prototypes and $s, d << n$. 
    \item Auxiliary Score - Equation \ref{eq:aux_score}: This procedure requires aggregation of neighborhood information. So, computational complexity is $O( md + nd^2 )$.

    \item Top-k pooling - Equation \ref{eq:idx} and \ref{eq:filter}: This procedure filters the node feature matrix and graph structure by selected nodes. It takes $O(n+m)$.

\end{enumerate}
In summary, the computational complexity of SPGP is $O(md + nd^2 + sdn)$. With $L$ layers graph neural network with GCN and SPGP with pooling ratio $p$, total computational complexity is $O(\frac{1-p^L}{1-p}(md+nd^2+sdn)) \approx O(L(md+nd^2+sdn)$.

\section{Proof}

\subsection{Proof for Theorem 1} \label{appendix:proof}
We assume that all nodes have the same feature vector, which means that the only information we can learn from the graph is structural information.
We first prove that the standard GNN generates the same embedding for every node in a regular graph (Lemma \ref{lemma1}). 
After that, we show that the SPGP can learn the structure role of nodes in a regular graph, but not the standard k-hop neighborhood based graph pooling methods (Theorem \hyperref[appendix:theorem1]{1}).


\begin{lemma}
The standard GNN generates the same embedding for every node in a regular graph.
\label{lemma1}
\end{lemma}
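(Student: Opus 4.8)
The plan is to proceed by induction on the number of message-passing layers $\ell$, showing that after every layer all nodes in a $d$-regular graph carry an identical hidden vector. For the base case $\ell = 0$, the assumption that all nodes share the same input feature vector gives $h^{(0)}_v = h^{(0)}_u$ for all $u, v \in \mathcal{V}$ immediately. For the inductive step, suppose $h^{(\ell)}_v = \bar{h}^{(\ell)}$ is the same vector for every node $v$. A standard message-passing layer updates a node by combining its own representation with a permutation-invariant aggregation over its neighbors, i.e. $h^{(\ell+1)}_v = \mathrm{UPD}\bigl(h^{(\ell)}_v, \mathrm{AGG}(\{\!\{ h^{(\ell)}_u : u \in \mathcal{N}(v) \}\!\})\bigr)$. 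Since the graph is $d$-regular, every node has exactly $d$ neighbors, so the multiset $\{\!\{ h^{(\ell)}_u : u \in \mathcal{N}(v) \}\!\}$ equals $\{\!\{ \bar{h}^{(\ell)}, \ldots, \bar{h}^{(\ell)} \}\!\}$ with $d$ copies — which is the \emph{same} multiset for every $v$. Hence $\mathrm{AGG}$ returns the same value, $\mathrm{UPD}$ returns the same value, and $h^{(\ell+1)}_v$ is node-independent, closing the induction.

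I would then note that the same argument goes through verbatim for the GCN layer used in the paper (Eq.~1--3): with the self-looped adjacency matrix $\widehat{A}$, a $d$-regular graph becomes $(d+1)$-regular, all symmetric normalization coefficients $1/\sqrt{|\widehat{\mathcal{N}}(u)|\,|\widehat{\mathcal{N}}(v)|}$ equal $1/(d+1)$, and (absent edge weights) the aggregation $\sum_{u} \tfrac{1}{d+1}\boldsymbol{W} h_u$ reduces to $\boldsymbol{W}\bar{h}$ for every node; the LeakyReLU and residual addition preserve this, so $\widetilde{h}_v$ is also node-independent. One should state explicitly which class of GNNs is covered — any architecture whose layer is a function of the node's own feature and a permutation-invariant function of the neighbor feature multiset — so that the ``standard GNN'' in the statement is unambiguous.

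The only real subtlety, and the step I would be most careful about, is pinning down the notion of ``regular graph'' and ``standard GNN'' precisely enough that the argument is airtight: regularity must be used in the form ``all neighbor multisets have the same cardinality,'' and if one wanted the neighbor multisets to be literally identical one needs the inductive hypothesis (identical features), not just equal degrees — so the induction has to be stated as ``identical hidden states,'' not merely ``indistinguishable degrees,'' from the start. A secondary remark worth including is that this is exactly the 1-WL indistinguishability obstruction: a regular graph is a fixed point of color refinement when all colors start equal, which is why purely k-hop / message-passing pooling scores collapse to a constant and cannot rank nodes — motivating part (i) of Theorem~1, for which this lemma is the workhorse.
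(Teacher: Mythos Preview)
Your proof is correct and follows essentially the same approach as the paper's: induction on the layer index, with the base case given by the identical-feature assumption and the inductive step using that every node's neighbor multiset contains the same number of identical vectors, so the permutation-invariant aggregation and subsequent update are node-independent. You are in fact more explicit than the paper about where regularity enters (equal multiset cardinality) and you add the GCN instantiation and the 1-WL remark, both of which are accurate and helpful but go beyond what the paper records.
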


\begin{proof}
The standard GNN updates node representations by following equations.
\begin{equation*}
    \begin{split}
    a^{(k)}_v &= \textnormal{AGGREGATE}^{(k)}(\{ h^{(k-1)}_u \mid u \in N(v) \}) \\
    h^{(k)}_v &= \textnormal{MERGE}^{(k)}(h^{(k-1)}_v, a^{(k)}_v)
    \end{split}
\end{equation*}
Given a regular graph $\mathcal{G} = (\mathcal{V}, \mathcal{E})$, we show for an arbitrary iteration $k$ and nodes $v_1, v_2 \in \mathcal{V}, v_1 \neq v_2$ that $h^{(k)}_{v_1} = h^{(k)}_{v_2}$.
In iteration $k=0$, $h^{(0)}_{v_1} = h^{(0)}_{v_2}$ is hold since all nodes are the same feature vector.
Assume for induction that $h^{(k-1)}_{v_1} = h^{(k-1)}_{v_2}$ for arbitrary nodes $v_1$ and $v_2$.
Let $a^{(k)}_{v_1} = \textnormal{AGGREGATE}^{(k)}(\{ h^{(k-1)}_{u_1} \mid u_1 \in N({v_1}) \})$ and $a^{(k)}_{v_2} = \textnormal{AGGREGATE}^{(k)}(\{ h^{(k-1)}_{u_2} \mid u_2 \in N({v_2}) \})$ be the aggregated messages from neighbors of $v_1$ and $v_2$.
By the induction hypothesis, we know that $\{ h^{(k-1)}_{u_1} \mid u_1 \in N({v_1}) \} = \{ h^{(k-1)}_{u_2} \mid u_2 \in N({v_2}) \}$ and that $a^{(k)}_{v_1} = a^{(k)}_{v_2}$.
Therefore, the input of $\textnormal{MERGE}^{(k)}$ is identical in $v_1$ and $v_2$.
This proves that $h^{(k)}_{v_1} = h^{(k)}_{v_2}$, and there by the lemma.
\end{proof}


\begin{appendix_theorem} \label{appendix:theorem1}
Given a regular graph that has prior graph structures,
\begin{enumerate}[label=(\roman*)]
    \item Standard k-hop neighborhood based graph pooling methods cannot distinguish the nodes.
    \item SPGP can assign different node scores based on the structural role of nodes.
\end{enumerate}

\end{appendix_theorem}

\begin{proof}
For the standard k-hop neighborhood based graph pooling methods, node representations for measuring the node score are based on the standard GNN. 
By Lemma \ref{lemma1}, in the regular graph, all node representations are the same. 
Therefore, the existing pooling methods cannot differentiate between nodes, even if they aggregate k-hop neighbor information, since the amount of information they collect is the same.
This proves the first part of the theorem.
We prove that the second part by showing that Equation \ref{eq:prototype_score} and \ref{eq:prototype} can learn the additional information of nodes belonging to the prior graph structures.
The structural roles contained in the given structure prototypes can be reflected in the nodes through the equations.
Hence, SPGP can assign different node scores based on the given structural information of the structure prototypes.
\end{proof}

\subsection{Experiment on regular graphs with degree perturbations} \label{appendix:regular_test}
To further support Theorem \ref{theorem1}, we measure the diversity of node scores in the regular graphs with degree perturbations.
We generate 100 random 6-regular graphs with 100 nodes.
Then, for each graph, it goes through the process of perturbing the degree of nodes by changing the existing edge to another non-existent edge, i.e., edge rewiring.
That is, starting with the regular graph, the degree distribution on the graph is diversified to give a difference in the amount of information around the k-hop neighborhood.
As shown in Figure \ref{fig:regular_graph}, in the regular graphs, only SPGP can capture node diversity. 
In addition, as the degree of perturbation progresses, the existing methods such as ASAP and HGP-SL also have distinguishing power of nodes, but are not as good as SPGP.

\section{Experimental Details}

\subsection{Datasets Details} \label{appendix:dataset_detail}
Summary statistics for the datasets are shown in Supplementary Table \hyperref[tab:dataset]{1}.

\begin{table}[t]
\caption*{Supplementary Table1: Summary statistics of datasets. $\mathcal{V}_{avg}, \mathcal{E}_{avg}$ are the average number of nodes and edges per graph, respectively.}
\centering
\begin{tabular}{clrrrrr}
\toprule
& Dataset & $\# Graphs$ & $\# Classes$ &$\mathcal{V}_{avg}$ & $\mathcal{E}_{avg}$ & $\# Tasks$ \\ \midrule
\multirow{5}{*}{TUDataset} & DD & 1,178 & 2&  284.3 & 715.7 & 1 \\
&PROTEINS & 1,113 & 2 & 39.1 & 72.8 & 1 \\
&NCI1 & 4,110 & 2 & 29.9 & 32.3 & 1 \\
&NCI109 & 4,127 & 2  & 29.7 & 32.1 & 1 \\
&FRANKENSTEIN & 4,337 & 2 & 16.9 & 17.9 & 1 \\ \midrule
\multirow{2}{*}{OGB Dataset} & ogbg-molhiv & 41,127 & 2  & 25.5 & 27.5 & 1 \\
&ogbg-molpcba & 437,929	 & 2 & 26.0 & 28.1 & 128 \\ \bottomrule
\end{tabular}

\label{tab:dataset}
\end{table}

\begin{table}[t]
\caption*{Supplementary Table 2: Preprocessing time for SPGP.}
\centering
\begin{tabular}{clrrr}
\toprule
& Dataset & $\# Graphs$ & Total Time (s) & Time per graph (s) \\ \midrule
\multirow{5}{*}{TUDataset} & DD & 1,178 & 97 & 0.0823 \\
& PROTEINS & 1,113 & 9 & 0.0081 \\
& NCI1 & 4,110 & 12 & 0.0029 \\
& NCI109 & 4,127 & 12 & 0.0029 \\
& FRANKENSTEIN & 4,337 & 292 & 0.0673 \\ \midrule
\multirow{2}{*}{OGB Dataset} & ogbg-molhiv & 41,127 & 114 & 0.0027 \\
& ogbg-molpcba & 437,929 & 1,163 & 0.0027 \\ \bottomrule
\end{tabular}
\label{tab:preprocess}

\end{table}


\paragraph{TUDataset}
DD~\cite{dobson2003distinguishing} and PROTEINS~\cite{borgwardt2005protein} are datasets of protein structures labeled according to whether they are enzymes or non-enzymes. 
The main difference between the two datasets is that the nodes are amino acids and protein secondary structures, respectively.
NCI1 and NCI109~\cite{wale2006acyclic} are datasets of chemical compounds labeled according to whether they have activity against non-small cell lung cancer cell line and ovarian cancer cell line, respectively.
The nodes in both datasets are one-hot encodings of atoms.
FRANKENSTEIN~\cite{orsini2015graph} is a chimeric dataset of Mutagenicity and MNIST.
The mutagenic effects of the chemical compounds are measured by Ames test (Salmonella typhimurium reverse mutation assay) and the most frequent atom symbols are remapped as high dimensional MNIST digit vectors.
Compared to NCI1 and NCI109, the unique information of an atom is lost because different images with the same number are used for the same atom (e.g., many different MNIST images ``1'').

\paragraph{OGB Dataset}
The two datasets "ogbg-molhiv" and "ogbg-molpcba" are chemical compound datasets for predicting molecular properties.
They are largest datasets in MoleculeNet~\cite{wu2018moleculenet}.
The performance of "ogbg-molhiv" is measured by ROC-AUC to predict whether a molecule inhibits HIV virus replication.
"ogbg-pcba" is a curated dataset from PubChem BioAssay (PCBA) providing biological activities of small molecules.
The performance of "ogbg-pcba" is measured with average precision because the dataset contains very few positive labels, i.e., extremely biased (positive rate 1.40\%) and various tasks (128 tasks).

\paragraph{Preprocessing Time of Structure Prototypes for SPGP}
The exploration of BCCs and cliques, which are the candidate structure prototypes of SPGP, only needs to be performed once in the preprocessing step, so they do not have a significant impact on model training. 
Each graph for preprocessing takes approximately 0.002 seconds to 0.08 seconds, and detailed information is described in Supplementary Table \hyperref[tab:preprocess]{2}.

\subsection{Brief Explanations of Hierarchical Graph Pooling Methods} \label{appendix:baselines}
Most of hierarchical graph pooling methods are node-selection based graph pooling, which requires an efficient node selection strategy.
To select important nodes, node features and graph structure are needed to compute the node score. From this point of view, SPGP and the existing methods except for DiffPool, which is a node clustering approach rather than node selection, can be summarized as follows and Supplementary Table \hyperref[tab:method-prop]{3}:
\begin{itemize}
    \item TopK~\cite{gao2019graph}: TopK utilizes only node features. Given a node representation matrix $H \in \mathbb{R}^{n \times d}$, TopK computes node scores with a trainable weight matrix $W \in \mathbb{R}^{d \times 1}$ by $\sigma(HW)$, where $\sigma$ is a sigmoid function.
    \item SAGPool~\cite{lee2019self}: Unlike TopK, SAGPool utilizes the amount of information of the node itself and the information of neighboring nodes. To consider node features and graph structure, SAGPool computes node scores by using an additional graph convolution.
    \item GSAPool~\cite{zhang2020structure}: GSAPool separately measures node features and graph structural information. Node importance is calculated by a trainable weight matrix. Structural importance is calculated by an additional graph convolution. Then, node scores are then computed by fusing the two importance via a weighted sum.
    \item ASAP~\cite{ranjan2020asap}: ASAP points out that, since previous methods use information from neighboring nodes when calculating node scores, neighborhoods around nodes with a large amount of information are also redundantly selected. To address this problem, ASAP proposes a new graph convolution called LEConv (Local Extrema Convolution). ASAP also performs structure learning to preserve graph connectivity as node selection can produce isolated nodes
    \item iPool~\cite{gao2021ipool}): Similar to ASAP, iPool also focuses on the relationship between a node and adjacent nodes. iPool computes node scores by L1-norm of the difference between the node representation and the summation of neighborhoods. Therefore, iPool is a parameter-free node selection strategy.
    \item HGP-SL~\cite{zhang2019hierarchical}: HGP-SL is an extension of iPool. The node scores are calculated in the same way as iPool. After that, HGP-SL performs sparse softmax-based structure learning, which requires a lot of computational overhead to deal with isolated nodes.
    \item SPGP (ours): Existing node selection strategies used graph structure information, but are limited to local information. Therefore, we propose, SPGP, a novel node selection method utilizing prior graph structures related to the global structure. In addition, we try to alleviate the isolated node problem without heavy structure learning by enabling information exchange between nodes belonging to the same structure through the structure prototype.
\end{itemize}

\begin{table}[tb]
\caption*{Supplementary Table 3: Properties of node-selection based graph pooling methods.}
\centering
\begin{tabular}{ccccc}
\toprule
\multirow{2}{*}{Method} & \multirow{2}{*}{Node Feature} & \multicolumn{2}{c}{Structure Information} & \multirow{2}{*}{\begin{tabular}[c]{@{}c@{}}Handling\\ Isolated Nodes\end{tabular}} \\ \cmidrule{3-4}
 &  & Local & Global &  \\ \midrule
TopK & \checkmark & & & \\
SAGPool & \checkmark & \checkmark & & \\
GSAPool & \checkmark & \checkmark & & \\
ASAP & \checkmark & \checkmark & & \checkmark \\
iPool & \checkmark & \checkmark & & \\
HGP-SL & \checkmark & \checkmark & & \checkmark \\
SPGP (ours) & \checkmark & \checkmark & \checkmark & \checkmark \\ \bottomrule
\end{tabular}
\label{tab:method-prop}

\end{table}

In summary, the main difference between SPGP and existing methods is that prior graph structures are utilized in the node selection strategy by learnable structure prototypes. 
As shown in Table \ref{tab:performance_comparison} and \ref{tab:ogb} of the main text, SPGP outperforms the other node selection strategies in the same GCN and TopK-based hierarchical pooling framework.

\begin{table}[t]
\caption*{Supplementary Table 4: Hyperparameter tuning Summary for TUDataset.}
\centering
\begin{tabular}{lc}
\toprule
Hyperparameter & Range \\ \midrule
Learning rate & \{0.1, 0.01, 0.005, 0.004, 0.003, 0.002, 0.001\} \\
Batch size & \{64, 128, 256\} \\
Hidden dimension & \{32, 64, 128, 256\} \\
Number of layers & \{2, 3, 4, 5, 6\} \\
Pooling ratio &  \{0.5, 0.8, 0.9, 0.95\} \\
Dropout rate & \{0.0, 0.1, 0.2\} \\
Regularization $\lambda$ & \{0.0, 0.2, 0.4, 0.6, 0.8, 1.0\} \\
Learning rate Decaying Step & \{25, 50\} \\\bottomrule
\end{tabular}
\label{tab:hyperparam-range}

\end{table}

\begin{table*}[t]
\caption*{Supplementary Table 5: Selected hyperparameters for TUDataset.}
\centering
\resizebox{\textwidth}{!}{
\begin{tabular}{lcccccccc}
\toprule
\multicolumn{1}{c}{Dataset} & \begin{tabular}[c]{@{}c@{}}Learning\\ rate\end{tabular} & \begin{tabular}[c]{@{}c@{}}Batch\\ size\end{tabular} & 
\begin{tabular}[c]{@{}c@{}}Hidden\\ dimension\end{tabular} & \begin{tabular}[c]{@{}c@{}}Number of\\ layers\end{tabular} & \begin{tabular}[c]{@{}c@{}}Pooling\\ ratio\end{tabular} & \begin{tabular}[c]{@{}c@{}}Dropout\\ rate\end{tabular} & \begin{tabular}[c]{@{}c@{}}Regularization\\ $\lambda$\end{tabular} & \begin{tabular}[c]{@{}c@{}}Decaying\\ Step\end{tabular} \\ \midrule
DD & 0.002 & 128 & 256 & 6 & 0.95 & 0.1 & 1.0 & 50 \\
PROTEINS & 0.001 & 128 & 64 & 4 & 0.8 & 0.1 & 0.0 & 25 \\
NCI1 & 0.001 & 64 & 256 & 2 & 0.8 & 0.0 & 0.8 & 25\\
NCI109 & 0.001 & 64 & 256 & 2 & 0.8 & 0.0 & 0.2 & 25\\
FRANKENSTEIN & 0.001 & 64 & 64 & 3 & 0.8 & 0.1 & 0.0 & 25 \\ \bottomrule
\end{tabular}%
\label{tab:hyperparam-tu}
}

\end{table*}

\begin{table*}[!tb]
\caption*{Supplementary Table 6: Selected hyperparameters for OGB dataset.}
\centering
\resizebox{\textwidth}{!}{
\begin{tabular}{lcccccccc}
\toprule
\multicolumn{1}{c}{Dataset} & \begin{tabular}[c]{@{}c@{}}Learning\\ rate\end{tabular} & \begin{tabular}[c]{@{}c@{}}Batch\\ size\end{tabular} & 
\begin{tabular}[c]{@{}c@{}}Hidden\\ dimension\end{tabular} & \begin{tabular}[c]{@{}c@{}}Number of\\ layers\end{tabular} & \begin{tabular}[c]{@{}c@{}}Pooling\\ ratio\end{tabular} & \begin{tabular}[c]{@{}c@{}}Dropout rate \\ (Pooling)\end{tabular} &
\begin{tabular}[c]{@{}c@{}}Dropout rate \\ (Conv)\end{tabular} &
\begin{tabular}[c]{@{}c@{}}Regularization\\ $\lambda$\end{tabular} \\ \midrule
ogbg-molhiv & 0.001 & 256 & 256 & 6 & 0.8 & 0.5 & 0.2 & 1.0 \\
ogbg-molpcba & 0.001 & 1,024 & 600 & 3 & 0.8 & 0.1 & 0.3 & 0.0 \\  \bottomrule
\end{tabular}%
\label{tab:hyperparam-ogb}
}

\end{table*}

\subsection{Training \& Hyperparameters}  \label{appendix:training}
\paragraph{TUDataset}
For all our experiments of SPGP on TUDataset, Adam \cite{kingma2014adam} optimizer with $l_2$ regularizer of $5e-4$ is used.
With different 20 seeds, on each seed, 10-fold cross-validation is performed with 80\% for training and 10\% for validation and test each.
Models are trained for 100 epochs with stepLR scheduler (lr decay of 0.1 after every 25 epochs).
The best hyperparameters are selected from a predefined range (Supplementary Table \hyperref[tab:hyperparam-range]{4}) and are provided in Supplementary Table \hyperref[tab:hyperparam-tu]{5}.
The model with the best validation accuracy is selected for testing.
Our models are implemented based on PyTorch Geometric \cite{Fey/Lenssen/2019}.
The experiments are conducted on a Linux server with 126GB memory, 40 CPUs, a single 
NVIDIA GeForce RTX 3090 or RTX 2080 Ti.

\paragraph{OGB Dataset}
For all our experiments of SPGP and baselines, Adam \cite{kingma2014adam} optimizer is used.
With different 20 seeds, we use the scaffold splitting scheme, which is the standard data split scheme provided by OGB dataset.
Models are trained for 100 epochs with early stopping (patience = 30).
The selected hyperparameters are described in Supplementary Table \hyperref[tab:hyperparam-ogb]{6}.
In case of ogbg-molpcba, we use batch size as 1,024 for fast training and utilize BatchNorm instead of LayerNorm.
The evaluator provided by the OGB dataset is used for evaluating model performances.
The experiments are conducted in the same experimental environment as TUDataset.

\subsection{Additional Ablation Study for Structure Prototype} \label{appendix:ablation_proto}
For an additional ablation study, we modify SPGP, which removes the score for the first term, namely the structure prototype, in Equation \ref{eq:prototype_score}, and evaluate it on the TUDataset. 
As shown in Supplementary Table \hyperref[tab:no-proto]{7}, the graph classification performances are significantly degraded without the guidance of node selection through the structure prototype.
Performance degradation is more severe in chemical datasets (NCI1, NCI109, and FRANKENSTEIN) than in biological datasets (DD and PROTEINS). 
This is because the major structure motifs of molecules are often expressed in BCCs or cliques, so the structure prototypes greatly contributes to performance improvement.

\begin{table*}[!tb]
\caption*{Supplementary Table 7: Ablation Study for Effect of the Structure Prototype. Average test accuracy and standard deviation is reported for 20 random seeds.}
\centering
\begin{tabular}{lccc}
\toprule
Dataset & SPGP w/o prototype (A) & SPGP & Diff (A-B)\\ \midrule
DD & $76.69 \pm 0.93$ & $77.76 \pm 0.73$ & $-1.07$\\
PROTEINS & $73.96 \pm 0.61$ & $75.20 \pm 0.74$ & $-1.24$ \\
NCI1 & $74.63 \pm 0.45$ & $78.90 \pm 0.27$ & $-4.27$ \\ 
NCI109 & $73.88 \pm 0.53$ & $77.27 \pm 0.32$ & $-3.39$ \\
FRANKENSTEIN & $61.39 \pm 0.79$ & $68.92 \pm 0.71$ & $-7.53$ \\ \bottomrule
\end{tabular}%
\label{tab:no-proto}
\end{table*}

\subsection{Memory Footprint Test} \label{appendix:mem_test}
We evaluate the GPU memory footprint of SPGP with baselines on the graphs with mean degrees similar to ogbg-molpcba generated by two random graph models: Erdős–Rényi~\cite{gilbert1959random} and Barabási–Albert~\cite{barabasi1999emergence}.
Erdős–Rényi model generates a graph constructed by connected nodes with respect to a probability $p$. 
In this experiment, a graph with $n$ nodes have the probability $p = \delta / n$, which $\delta = 2.16$ is a mean degree per graph in ogbg-molpcba dataset.
Barabási–Albert model generates a scale-free network using a preferential attachment mechanism.
Similar to the experiment of Erdős–Rényi model, newly added nodes preferentially are connected two nodes with high degree nodes.
In both graph models, 1,000 random graphs are generated and the following experimental setup is used for evaluating the memory footprint: batch size = 10, \#  of layers = 3, \# of hidden dimension = 300, pooling ratio = 0.8.
The pooling ratio is chosen as the best performing hyperparameter setting in our experiments on the five small-scale and two large-scale benchmark datasets.
When the pooling ratio is reduced from 0.8 to 0.5, the memory footprint required for each model decreased, but the pattern of increasing memory footprint with the graph size is observed as shown in Figure \ref{fig:mem_test} of main text.
Additionally, decreasing the pooling ratio reduces memory footprint but also results in decreased performance in benchmark dataset experiments.

\begin{figure}[!tb]
\centering
\includegraphics[width=0.6\textwidth]{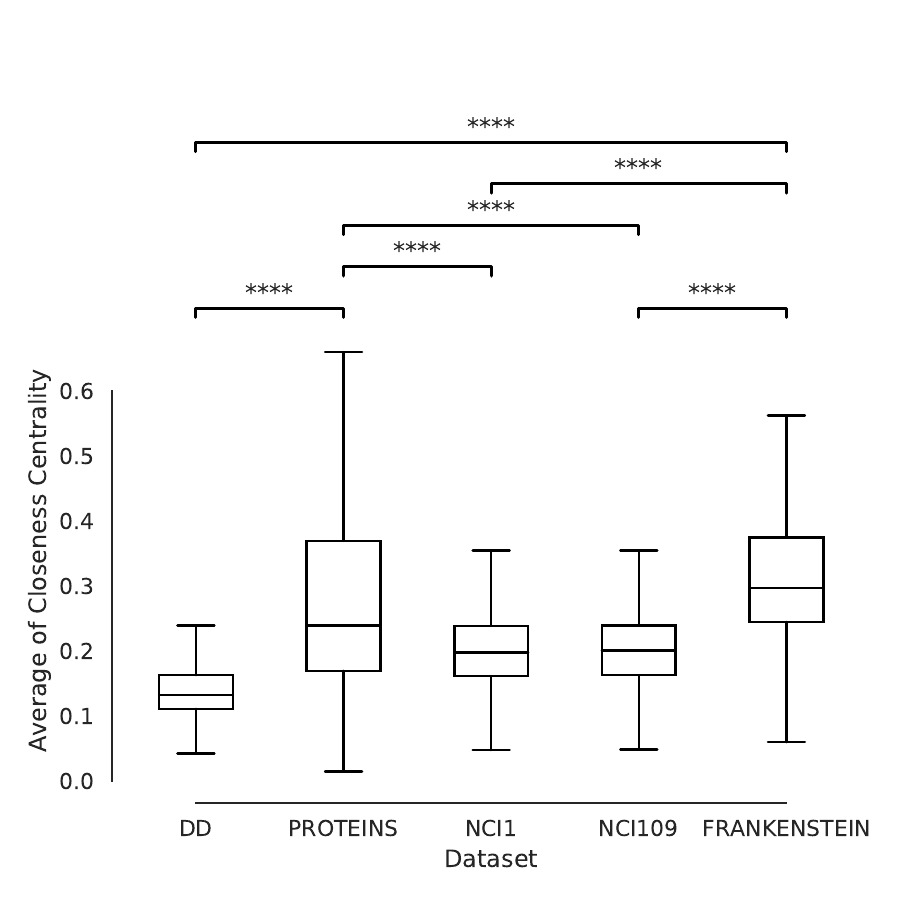} 
\caption*{Supplementary Figure 1: Average closeness centrality of datasets. To compare distribution of two datasets, Mann-Whitney U test two-sided with Bonferroni correction is used. Statistical significance is denoted as symbols:\\$*: \num{0.01}  < p \leq \num{0.05} $, \\  $**: \num{0.001}  < p \leq \num{0.01} $, \\ $***: \num{0.0001}  < p \leq \num{0.001} $, \\ $****: p \leq \num{0.0001} $}
\label{fig:closeness}
\end{figure}

\subsection{Closeness Centrality Analysis of Datasets}  \label{appendix:closness}
Supplementary Figure \hyperref[fig:closeness]{1} shows that the distributions of average of the closeness centrality are different between datasets.
The average of closeness centrality in each graph is measured by averaging the values of closeness centrality of nodes in the graph.
The statistical significance of the distribution difference is calculated by two-sided Mann-Whitney U test with Bonferroni correction.
As shown in Supplementary Figure \hyperref[fig:closeness]{1}, PROTEINS and FRANKENSTEIN datasets have higher closeness centrality values than other datasets.
The high closeness centrality of a node means that the length of the path required to propagate the information of that node to all other nodes is short.
In other words, from a graph point of view, a large average closeness centrality means that the node's information is more globally spread.
For this reason, it may not be necessary to incorporate the auxiliary score in the PROTEINS and FRANKENSTEIN datasets.

\subsection{Additional Evaluation using ogbg-molpcba for GCN/GIN with Virtual Node} \label{appendix:vn_res}
SPGP can be jointly used with advanced graph convolution layers such as GCN/GIN + virtual node. 
We performs SPGP with the official implementation of GCN/GIN + virtual node (VN)~\cite{hu2020ogb} on ogbg-molpbca dataset (450k molecules with 128 classification tasks). 
Supplementary Table \hyperref[tab:vn-res-ogb]{8} shows the average precision of validation and test dataset with two variants of SPGP model. 
First, after the last convolution layer, we add a single SPGP layer (p=0.8). 
Interestingly, we observe that the performance is improved compared to the GCN/GIN + virtual nodes. 
Inspired by this result, we also add one more SPGP layer (p=0.8) after the last second convolution layer, then further performance improvement is also observed in this case.
This is a very interesting result demonstrating the effectiveness of SPGP.

\begin{table*}[!tb]
\caption*{Supplementary Table 8: Performance of SPGP with GCN/GIN+VN.}
\centering
\begin{tabular}{lccc}
\toprule
Model & Validation AP & Test AP \\ \midrule
GCN+VN (leaderboard) & $0.2495 \pm 0.0042$ & $0.2424 \pm 0.0034$ \\
GCN+VN+SPGP (last layer, $p=0.8$) & $0.2556 \pm 0.0021$ & $0.2504 \pm 0.0032$  \\
GCN+VN+SPGP (last 2 layers, $p=0.8$) & $\boldsymbol{0.2579 \pm 0.0044}$ & $\boldsymbol{0.2538 \pm 0.0029}$  \\ \midrule
GIN+VN (leaderboard) & $0.2798 \pm 0.0025$ & $0.2703 \pm 0.0023$  \\
GIN+VN+SPGP (last layer, $p=0.8$) & $0.2818 \pm 0.0032$ & $0.2712 \pm 0.0029$  \\
GIN+VN+SPGP (last 2 layers, $p=0.8$) & $\boldsymbol{0.2848 \pm 0.0018}$ & $\boldsymbol{0.2736 \pm 0.0016}$ \\ \bottomrule
\end{tabular}%
\label{tab:vn-res-ogb}
\end{table*}


\section{Broader Impact} \label{appendix:broader_impact}




Our work introduces a novel graph pooling method that utilizes prior graph structures to learn more accurate graph-level representations.
While most of the existing graph pooling methods focus only on k-hop neighborhood in the graph structure information, our work uses explicit graph structures to provide more structural information to the model.
Our experimental results show that the introduction of various global graph structures can improve graph classification performances.
Thus, our work can provide insight that the use of domain knowledge is important in graph learning, and potentially have positive social impacts in various fields of study related to learning graph representation
For example, biological pathways can be used to obtain patient embeddings from protein-protein interaction networks.
Chemical motifs could play an important role in predicting molecular properties for AI-based drug discovery.

One potential negative societal impact is that the application of our work to specific fields may impose the burden of obtaining well-defined prior graph structures based on domain knowledge.
To mitigate this, we introduce two well-known graph structures in graph theory: biconnected components and cliques.
These two structures are good examples of containing global structural information in general graphs.
Therefore, inspired by our work, it can be a good starting point when conducting research in a specific field.

\end{document}